\newtheorem*{theorem*}{Theorem}
\newtheorem{theorem}{Theorem}
\newtheorem{lemma}{Lemma}
\definecolor{myRed}{RGB}{228, 26, 28}
\definecolor{myBlue}{RGB}{55, 126, 184}
\definecolor{myGreen}{RGB}{77, 175, 74}
\definecolor{myPurple}{RGB}{152, 78, 163}
\definecolor{myOrange}{RGB}{255, 127, 0}
\definecolor{myGrey}{RGB}{153, 153, 153}
\definecolor{myBrown}{RGB}{166, 86, 40}
\definecolor{myPink}{RGB}{247, 129, 191}
\newcommand{\R}{\mathbb{R}}
\title{What if Neural Networks had SVDs?}
\author{
Alexander Mathiasen\thanks{Aarhus University, \{alexander.mathiasen, fhvilshoj, mrjakobdk\}@gmail.com, davide@cs.au.dk} 
\And
Frederik Hvilsh\o j\footnotemark[1]
\And
Jakob R\o dsgaard J\o rgensen\footnotemark[1]
\And
Anshul Nasery\footnotemark[1] \; \thanks{Indian Institute of Technology, Bombay, anshulnasery@gmail.com}
\And
Davide Mottin\footnotemark[1]
}
\begin{document}

\maketitle

\begin{abstract}
Various Neural Networks employ time-consuming matrix operations like matrix inversion. 
Many such matrix operations are faster to compute given the Singular Value Decomposition (SVD). 
Techniques from \cite{svdnn,orthhh} allow using the SVD in Neural Networks without computing it. 
In theory, the techniques can speed up matrix operations, however, in practice, they are not fast enough.  
We present an algorithm that is fast enough to speed up several matrix operations.
The algorithm increases the degree of parallelism of an underlying matrix multiplication $H\cdot X$ where $H$ is an orthogonal matrix represented by a product of Householder matrices. 
\end{abstract}

\section{Introduction} \label{sec:intro}
\begin{wrapfigure}[12]{r}{0.3975\textwidth}
    \centering
    \vspace{-7mm}
    \includegraphics[width=0.41\textwidth]{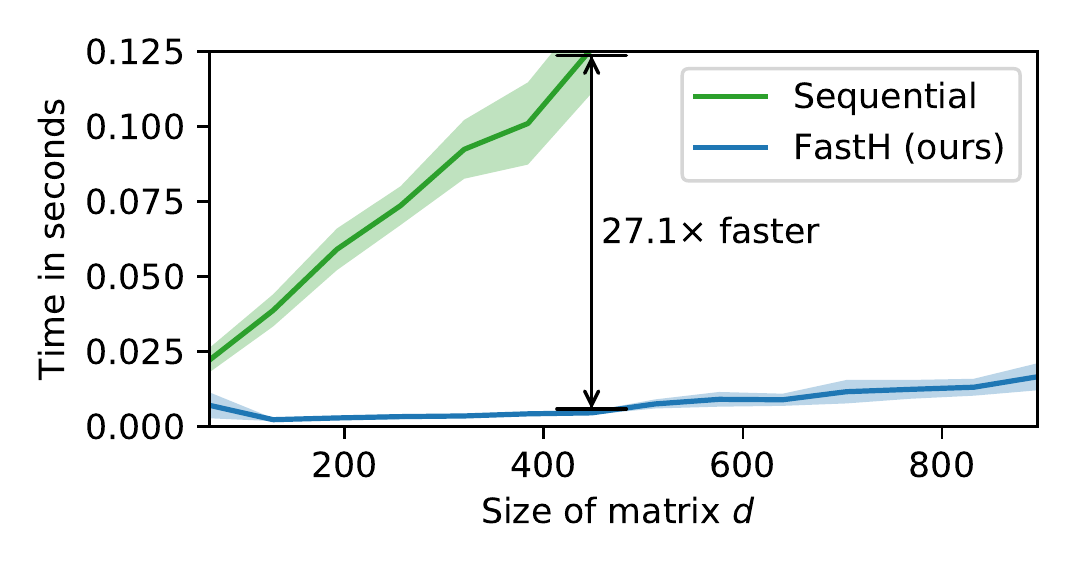}
    \caption{Time consumption of matrix inversion in Neural Networks. 
    The plot compares FastH against the sequential algorithm from \cite{svdnn} (see \Cref{sec:exp}). 
    }
    \label{fig:time}
\end{wrapfigure}
What could be done if the Singular Value Decomposition (SVD) of the weights in a Neural Network was given?  
Time-consuming matrix operations, such as matrix inversion \cite{emerging}, could be computed faster, reducing training time. 
However, on $d\times d$ weight matrices it takes $O(d^3)$ time to compute the SVD, which is not faster than computing the matrix inverse in $O(d^3)$ time. 
In Neural Networks, one can circumvent the SVD computation by using the SVD reparameterization from \cite{svdnn}, which, in theory, reduces the time complexity of matrix inversion from $O(d^3)$ to $O(d^2)$. 
However, in practice, the SVD reparameterization attains no speed-up for matrix inversion on GPUs. 

The difference between theory and practice occurs because the previous technique increase sequential work, which is not taken into account by the time complexity analysis. 
On a $d\times d$ weight matrix, the previous technique entails the computation of $O(d)$ sequential inner products, which is ill-fit for parallel hardware like a GPU because the GPU cannot utilize all its cores.
For example, if a GPU has 4000 cores and computes sequential inner products on 100-dimensional vectors, it can only utilize 100 cores simultaneously, leaving the remaining 3900 cores to run idle.


We introduce a novel algorithm, FastH, which increases core utilization, leaving less cores to run~idle.
This is accomplished by increasing the degree of parallelization of an underlying matrix multiplication $H\cdot X$ where $H$ is an orthogonal matrix represented by a product of Householder matrices.
FastH retains the same desirable time complexity as the sequential algorithm from \cite{svdnn} while reducing the number of sequential operations. 
On a mini-batch of size $m>1$, FastH performs $O(d/m+m)$ sequential matrix-matrix operations instead of $O(d)$ sequential vector-vector operations.

In practice, FastH is faster than all algorithms from \cite{svdnn}, e.g., FastH is $27$ times faster than their sequential algorithm, see \Cref{fig:time}. Code \url{www.github.com/AlexanderMath/fasth}. 



\section{Background} \label{sec:background}
\subsection{Fast Matrix Operations Using SVD }
The SVD allows faster computation of many matrix operations commonly used by Neural Networks. 
A few examples include matrix determinant \cite{nice}, matrix inverse \cite{glow}, Spectral Normalization \cite{sngan}, the matrix exponential \cite{matrixexp1}, the Cayley transform \cite{cayley1}, weight decay, condition number and compression by low-rank approximation \cite{compresswithsvd}. 
Proofs can be found in most linear algebra textbooks, see, e.g., \cite{strang}. 

\subsection{The SVD Reparameterization}\label{subsec:svdtrick} 
This subsection describes how \cite{svdnn} allows for using the SVD of the weight matrices in Neural Networks without computing them, and in particular, how this approach is limited by the computation of sequential inner products. 
Let $W=U\Sigma V^T$ be the SVD of a weight matrix $W$ where $\Sigma$ is a diagonal matrix and $U,V$ are orthogonal matrices, i.e, $U^T=U^{-1}$ and $V^T=V^{-1}$. 
The goal is to perform gradient descent updates to $W$ while preserving the SVD.
Consider updating $U,\Sigma,V$ a small step $\eta\in \R$ in the direction of gradients $\nabla_U, \nabla_\Sigma, \nabla_V$. 
\begin{align*}
    \Sigma'=\Sigma - \eta \nabla_\Sigma, \quad 
    U' = U - \eta \nabla_U, \quad 
    V' = V - \eta \nabla_V.
\end{align*}
While $\Sigma'$ remains diagonal, both $U'$ and $V'$ are in general not orthogonal, which is needed to preserve the SVD. 
To this end, \cite{svdnn} suggested using a technique from \cite{orthhh} which decomposes an orthogonal matrix as a product of $d$ Householder matrices $H_1,\dots ,H_d$:
\begin{equation}\label{equ:hh}
U=\prod_{i=1}^{d} H_i \quad \quad H_i=I-2\frac{v_iv_i^T}{||v_i||_2^2}\quad\quad v_i\in \R^d.
\end{equation}
Householder matrices satisfy several useful properties. 
In particular, the matrix $U$ remains orthogonal under gradient descent updates $v_i=v_i-\eta\nabla_{v_i}$ \cite{orthhh}.
Furthermore, all products of Householder matrices are orthogonal, and any $d\times d$ orthogonal matrix can be decomposed as a product of $d$ Householder matrices \cite{qrhh}. 
Householder matrices thus allow us to perform gradient descent over orthogonal matrices, which allows us to preserve the SVD of $W$ during gradient descent updates. 

\paragraph{Multiplication. }
One potential issue remains. 
The Householder decomposition might increase the time it takes to multiply $UX$ for a mini-batch $X\in \R^{d \times m }$ during the forward pass. Computing 
    $UX=H_1 \cdots (H_{d-1}(H_d\cdot  X))$
takes $d$ Householder multiplications. 
If done sequentially, as indicated by the parenthesis, each Householder multiplication can be computed in $O(dm)$ time \cite{svdnn}. 
All $d$ multiplications can thus be done in $O(d^2m)$ time. 
Therefore, the Householder decomposition does not increase the time complexity of computing $UX$. 

Unfortunately, the $O(d^2m)$ time complexity comes at the cost of multiplying each Householder matrix sequentially,
and each Householder multiplication entails computing an inner product, see \Cref{equ:hh}. 
The multiplication $UX$ then requires the computation of $O(d)$ inner products sequentially. 
Such sequential computation is slow on parallel hardware like GPUs, much slower than normal matrix multiplication.
To exploit GPUs, \cite{svdnn} suggested using a parallel algorithm that takes $O(d^3)$ time, but this is no faster than computing the SVD. 

We are thus left with two options: (1) an $O(d^2m)$ sequential algorithm and (2) an $O(d^3)$ parallel algorithm.
The first option is undesirable since it entails the sequential computation of $O(d)$ inner products. 
The second option is also undesirable since it takes $O(d^3)$ which is the same as computing the SVD, i.e., we might as-well just compute the SVD. 
In practice, both algorithms usually achieve no speed-up for the matrix operations like matrix inversion as we show in \Cref{subsec:matrix_operations}. 

Our main contribution is a novel parallel algorithm, FastH, which resolves the issue with sequential inner products without increasing the time complexity. 
FastH takes $O(d^2m)$ time but performs $O(d/m+m)$ sequential matrix-matrix operations instead of $O(d)$ sequential vector-vector operations (inner products). 
In practice, FastH is up to $6.2\times $ faster than the parallel algorithm and up to $27.1\times$ faster than the sequential algorithm, see \Cref{subsec:running_time}. 

\paragraph{Mathematical Setting. } 
We compare the different methods by counting the number of sequential matrix-matrix and vector-vector operations. 
We count only once when other sequential operations can be done in parallel. 
For example, processing $v_1,...,v_{d/2}$ sequentially while, in parallel, processing $v_{d/2+1},...,v_d$ sequentially, we count $d/2$ sequential vector-vector operations. 

\paragraph{Orthogonal Gradient Descent. }
The SVD reparameterization performs gradient descent over orthogonal matrices. 
This is possible with Householder matrices, 
however, other techniques, such as \cite{matrixexp2,cayley2}, rely on the matrix exponential and the Cayley map, respectively. 
For $d\times d$ matrices both techniques spend $O(d^3)$ time, which is no faster than computing the SVD.

\section{Fast Householder Multiplication (FastH)} 
\label{sec:algo}

\subsection{Forward Pass}\label{subsec:algo_forward}
Our goal is to create an $O(d^2m)$ algorithm with few sequential operations that solves the following problem: 
Given an input $X\in\R^{d\times m}$ with $d>m>1$ and Householder matrices $H_1,...,H_d$, compute the output $A=H_1\cdots H_d X$. 
For simplicity, we assume $m$ divides $d$. 

Since each $H_i$ is a $d\times d$ matrix, it would take $O(d^3)$ time to read the input $H_1,...,H_d$. 
Therefore, we represent each Householder matrix $H_i$ by its Householder vector $v_i$ such that $H_i=I-2v_iv_i^T/||v_i||_2^2$. 
A simplified version of the forward pass of FastH proceeds as follows: 
divide the Householder product $H_1 \cdots H_d$ into smaller products $P_1\cdots P_{d/m}$ so each $P_i$ is a product of $m$ Householder matrices:
\begin{equation}\label{equ:P}
P_i = H_{(i-1)\cdot m + 1} \cdots H_{i \cdot m }\quad\quad i=1,...,d/m. 
\end{equation}
All $d/m$ products $P_i$ can be computed in parallel. 
The output can then be computed by $A=P_1\cdots P_{d/m} X$ instead of $A=H_1\cdots H_d X$, which reduces the number of sequential matrix multiplications from $d$ to $d/m$. 

This algorithm computes the correct $A$. However, the time complexity increases due to two issues. 
First, multiplying each product $P_i$ with $X$ takes $O(d^2m)$ time, a total of $O(d^3)$ time for all $d/m$ products. 
Second, we need to compute all $d/m$ products $P_1,...,P_{d/m}$ in $O(d^2m)$ time, so each product $P_i$ must be computed in $O(d^2m/(d/m))=O(dm^2)$ time. 
If we only use the Householder structure, it takes $O(d^2m)$ time to compute each $P_i$, which is not fast enough. 

Both issues can be resolved, yielding an $O(d^2m)$ algorithm. 
The key ingredient is a linear algebra result \cite{wydec} that dates back to 1987. The result is restated in \Cref{lemma:wydec}. 
\begin{lemma}\label{lemma:wydec}For any $m$ Householder matrices $H_1,...,H_m$ there exists $W,Y\in \R^{d\times m}$ st. 
$I-2WY^T = H_1 \cdots H_m$. 
Computing $W$ and $Y$ takes $O(dm^2)$ time and $m$ sequential Householder multiplications. 
\end{lemma}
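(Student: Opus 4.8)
The plan is to establish existence together with an explicit form for $W$ and $Y$ by induction on the number of Householder factors $m$, and then to analyze the cost of evaluating that explicit form. Throughout I would normalize each $v_i$ to unit length — which does not change $H_i$, since $H_i = I - 2v_iv_i^T/\|v_i\|_2^2$ depends only on the direction of $v_i$ — so that $H_i = I - 2v_iv_i^T$, keeping the algebra clean.

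For the base case $m=1$, I would take $W = Y = v_1$ as $d\times 1$ matrices, so that $I - 2WY^T = I - 2v_1v_1^T = H_1$. For the inductive step, assume $H_1\cdots H_{m-1} = I - 2W'Y'^T$ with $W',Y'\in\R^{d\times(m-1)}$. Multiplying on the right by $H_m$ and collecting terms,
\begin{align*}
(I - 2W'Y'^T)(I - 2v_m v_m^T) = I - 2W'Y'^T - 2\,z_m\, v_m^T, \quad z_m := (I - 2W'Y'^T)v_m = H_1\cdots H_{m-1}v_m.
\end{align*}
Hence appending $z_m$ as a new column of $W'$ and $v_m$ as a new column of $Y'$ produces $W,Y\in\R^{d\times m}$ with $I-2WY^T = H_1\cdots H_m$. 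Unrolling the recursion reads off the explicit form: the $i$-th column of $Y$ is $v_i$, while the $i$-th column of $W$ is $z_i = H_1\cdots H_{i-1}v_i$ (with $z_1 = v_1$). This settles existence.

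It remains to compute the columns $z_1,\dots,z_m$ within $O(dm^2)$ time using few sequential operations. Evaluating each $z_i = H_1\cdots H_{i-1}v_i$ independently would cost $\Theta(m^2)$ Householder multiplications, which is too many sequential steps. Instead I would organize the computation as a triangular sweep over the matrix $V = [v_1\mid\cdots\mid v_m]$: for $j = m-1$ down to $1$, apply the single Householder $H_j$ simultaneously to all trailing columns $j+1,\dots,m$. Tracing column $i$, it is acted on at steps $j = i-1, i-2, \dots, 1$ in that order, so it ends as $H_1 H_2 \cdots H_{i-1} v_i = z_i$. Applying $H_j$ to a block of $m-j$ columns costs $O(d(m-j))$, for a total of $\sum_{j=1}^{m-1} O(d(m-j)) = O(dm^2)$, and since the columns inside each block are processed in parallel this amounts to only $m-1 = O(m)$ sequential Householder multiplications.

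The existence argument is routine algebra, so I expect the main obstacle to be the complexity claim — specifically, showing that all $z_i$ can be produced with only $O(m)$ sequential Householder multiplications rather than the $O(m^2)$ that a column-by-column evaluation suggests. The crux is the triangular application order: sweeping $j$ downward and applying $H_j$ to a shrinking trailing block ensures each column accumulates the factors in exactly the order $H_1\cdots H_{i-1}$, and verifying that this order is correct (rather than its reverse) is the step that needs the most care.
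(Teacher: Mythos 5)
Your proof is correct. Note that the paper itself does not prove \Cref{lemma:wydec} at all---it is restated from the 1987 WY-decomposition paper \cite{wydec} and used as a black box---so your argument supplies a self-contained derivation, and it is essentially the standard one: the rank-one update $(I-2W'Y'^T)(I-2v_mv_m^T)=I-2W'Y'^T-2z_mv_m^T$ with $z_m=H_1\cdots H_{m-1}v_m$ is exactly how the WY form is built up, and your triangular sweep correctly produces all the $z_i$ with $O(dm^2)$ work and $m-1$ sequential (block) Householder multiplications, matching the lemma's claimed bounds. One small quibble with a motivating remark rather than with the proof: evaluating each $z_i=H_1\cdots H_{i-1}v_i$ independently is $\Theta(m^2)$ Householder multiplications of \emph{total work} but still only $O(m)$ \emph{sequential} steps if the $m$ chains run in parallel (each column touches at most $m-1$ factors), and it also meets the $O(dm^2)$ time bound; so the column-by-column alternative is not actually deficient, your sweep is just the cleaner way to organize it. The verification you flag as delicate---that sweeping $j$ from $m-1$ down to $1$ accumulates the factors on column $i$ in the order $H_1H_2\cdots H_{i-1}$ rather than the reverse---checks out.
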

For completeness, we provide pseudo-code in \Cref{algo:forward}. 
\Cref{thm:forward_algorithm} states properties of \Cref{algo:forward} and its proof clarify how \Cref{lemma:wydec} solves both issues outlined above.

\begin{theorem}\label{thm:forward_algorithm}
Algorithm 1 computes $H_1\cdots H_d X$ in $O(d^2m)$ time with $O(d/m + m)$ sequential matrix multiplications. 
\end{theorem}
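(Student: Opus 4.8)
The plan is to separate the argument into two independent claims, correctness and resource usage, and then to bound the resources by splitting \Cref{algo:forward} into the two phases suggested by the simplified description: building the block products $P_i$ in $WY$ form, and then multiplying these blocks through $X$.

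For correctness, I would observe that by associativity of matrix multiplication and the definition of $P_i$ in \Cref{equ:P} we have $P_1\cdots P_{d/m}=H_1\cdots H_d$, so $A=P_1\cdots P_{d/m}X=H_1\cdots H_d X$. The only additional thing to verify is that representing each $P_i$ by the pair $(W_i,Y_i)$ from \Cref{lemma:wydec} and evaluating $A=P_1(P_2(\cdots(P_{d/m}X)))$ via the identity $P_iB=B-2W_i(Y_i^TB)$ does not change the computed matrix; this is immediate from $P_i=I-2W_iY_i^T$.

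For the resource bounds I would analyze the two phases separately. In the construction phase, \Cref{lemma:wydec} produces each $(W_i,Y_i)$ in $O(dm^2)$ time using $m$ sequential Householder multiplications. Since the $d/m$ blocks depend on disjoint sets of Householder vectors, they run in parallel, so this phase costs $O((d/m)\cdot dm^2)=O(d^2m)$ total work but only $m$ sequential matrix operations (this resolves the second issue, since each block is built in $O(dm^2)$ rather than $O(d^2m)$). In the multiplication phase, each application $P_iB=B-2W_i(Y_i^TB)$ with $B\in\R^{d\times m}$ reduces to forming the $m\times m$ matrix $Y_i^TB$ and then the $d\times m$ matrix $W_i(Y_i^TB)$, each costing $O(dm^2)$ time and a constant number of sequential matrix multiplications. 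This is exactly the step resolving the first issue: the $WY$ form replaces a $d\times d$ times $d\times m$ product, which would cost $O(d^2m)$ per block and $O(d^3)$ overall, by an $O(dm^2)$ product. Running the $d/m$ applications sequentially gives $O(d^2m)$ total time and $O(d/m)$ sequential matrix multiplications. Adding the two phases yields $O(d^2m)$ time and $O(m+d/m)$ sequential matrix multiplications.

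The main obstacle I anticipate is the bookkeeping of the sequential-operation count rather than the time complexity, which follows routinely from \Cref{lemma:wydec}. I must argue carefully that the $m$ sequential Householder multiplications inside each block are counted once, because the $d/m$ blocks are data-independent and execute concurrently under the counting convention of the Mathematical Setting paragraph, and that the multiplication phase contributes depth proportional to the number of blocks $d/m$ rather than to the number of Householder matrices $d$. Verifying that these two depths are additive, giving $O(m+d/m)$, and that neither phase silently reintroduces a factor of $d$ into the sequential count, is where the care is needed; the arithmetic bounding total work by $O(d^2m)$ is straightforward once the per-block costs and the $WY$ multiplication identity are in hand.
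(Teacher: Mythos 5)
Your proposal is correct and follows essentially the same route as the paper's own proof: it splits the analysis into the same two phases (constructing the $WY$ forms of the $P_i$ in parallel via \Cref{lemma:wydec}, then sequentially applying $P_iB=B-2W_i(Y_i^TB)$), and arrives at the same $O(d^2m)$ time and $O(d/m+m)$ sequential-operation bounds by the same accounting. The only difference is that you spell out more explicitly why the $WY$ form reduces each block application from $O(d^2m)$ to $O(dm^2)$, which the paper leaves implicit in the dimensions of $W_i,Y_i$.
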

\begin{proof}

\textbf{Correctness.} 
Each iteration of Step 2 in \Cref{algo:forward} utilizes \Cref{lemma:wydec} to compute $A_i = A_{i+1}-2W_i(Y_i^TA_{i+1}) =P_i A_{i+1}$. Therefore, at termination, $A_1=P_1\cdots P_{d/m}X$. In Step~1, we used \Cref{lemma:wydec} to compute the $P_i$'s such that $A=H_1\cdots H_dX$ as wanted. 

\textbf{Time Complexity. } 
Consider the for loop in Step 1. 
By \Cref{lemma:wydec}, each iteration takes $O(dm^2)$ time. 
Therefore, the total time of the $d/m$ iterations is $O(dm^2d/m)=O(d^2m)$. 
Consider iteration $i$ of the loop in Step 2. 
The time of the iteration is asymptotically dominated by both matrix multiplications.  
Since $A_{i+1},X_i$ and $Y_i$ are $d\times m$ matrices, it takes $O(dm^2)$ time to compute both matrix multiplications. 
There are $d/m$ iterations so the total time becomes $O(dm^2 d/m)=O(d^2m)$. 

\textbf{Number of Sequential Operations.} 
Each iteration in Step 2 performs two sequential matrix multiplications. 
There are $d/m$ sequential iterations which gives a total of $O(d/m)$ sequential matrix multiplications. 
Each iteration in Step 1 performs $m$ sequential Householder multiplications to construct $P_i$, see \Cref{lemma:wydec}.
Since each iteration is run in parallel, the algorithm performs no more than $O(d/m+m)$ sequential matrix multiplications. 
\end{proof}

\paragraph{Remark. }
\Cref{subsec:algo_backward} extends the techniques from this section to handle gradient computations. 
For simplicity, this section had \Cref{algo:forward} compute only $A_1$, however, 
in \Cref{subsec:algo_backward} it will be convenient to assume $A_1,...,A_{d/m}$ are precomputed. 
Each $A_i=P_i\cdots P_{d/m}X$ can be saved during Step 2 of \Cref{algo:forward} without increasing asymptotic memory consumption. 

\begin{figure*}[b!]
    \centering
    \begin{subfigure}{0.48\textwidth}
          \centering
          \includegraphics[width=\textwidth,page=7]{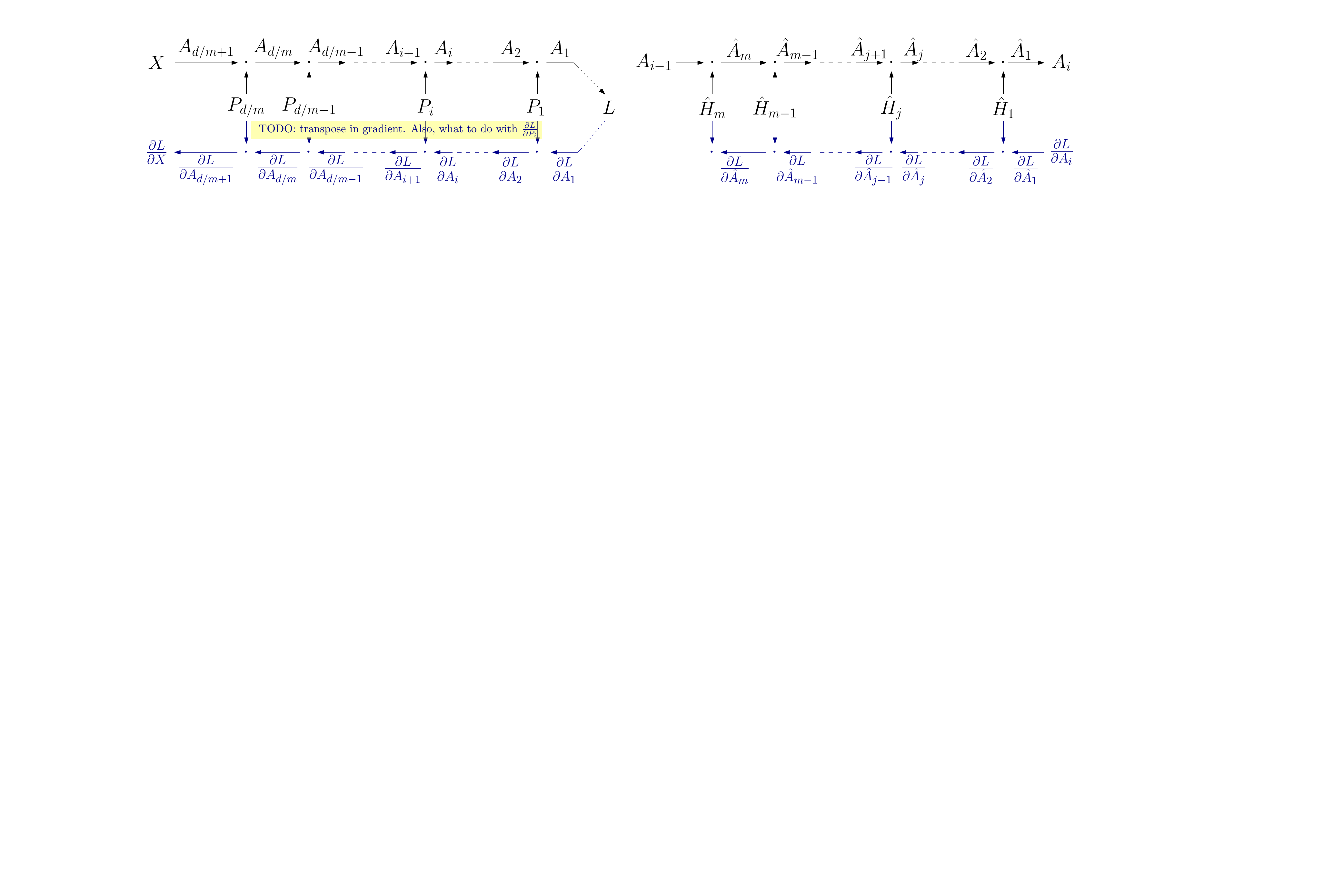}
          \caption{Step 1: Sequential part of \Cref{algo:backward}. }
          \label{subfig:sequential}
    \end{subfigure}
    \begin{subfigure}{0.48\textwidth}
          \centering
          \includegraphics[width=\textwidth,page=11]{graphics/gradients}
          \caption{Step 2: The $i$'th subproblem in \Cref{algo:backward}. }
          \label{subfig:parallel}
    \end{subfigure}
    \caption{Computational graph of Step 1 and the $i$'th subproblem in Step 2 from \Cref{algo:backward}. 
    }
    \label{fig:schematic}
\end{figure*}

\subsection{Backwards Propagation } 
\label{subsec:algo_backward} 
This subsection extends the techniques from \Cref{subsec:algo_forward} to handle gradient computations. 
Our goal is to create an $O(d^2m)$ algorithm with few sequential operations that solves the following problem: 
Given $A_1,\dots, A_{d/m+1}$, 
$P_1,...,P_{d/m}$ and 
$\frac{\partial L }{ \partial A_1}$ for some loss function $L$, 
compute $\frac{\partial L}{\partial X}$ and 
$\frac{\partial L}{\partial v_1},...,\frac{\partial L}{\partial v_d}$, where $v_j$ is a Householder vector st. $H_j=I-2v_jv_j^T/||v_j||^2_2$. 

Since each $P_i$ is a $d\times d$ matrix, it would take $O(d^3/m)$ time to read the input $P_1,...,P_{d/m}$. 
Therefore, we represent each $P_i$ by its WY decomposition $P_i=I-2WY^T$. 

On a high-level the backward pass of FastH has two steps. 
\paragraph{Step 1. } 
Sequentially compute $\frac{\partial L}{\partial A_2}$, $\frac{\partial L}{\partial A_3},...,\frac{\partial L}{\partial A_{d/m+1}}$ by
\begin{equation}\label{equ:back_A}
\frac{\partial L}{\partial A_{i+1}}=\left[ \frac{\partial A_i}{\partial A_{i+1}} \right]^T \frac{\partial L}{\partial A_i}= P_i^T \frac{\partial L}{\partial A_i}
\end{equation}
This also gives the gradient wrt. $X$ since  $X=A_{d/m+1}$. 

\paragraph{Step 2.}
Use $\frac{\partial L}{\partial A_1},...,\frac{\partial L}{\partial A_{d/m}}$ from Step 1 to compute the gradient $\frac{\partial L}{\partial v_j}$ for all $j$. 
This problem can be split into $d/m$ subproblems, which can be solved in parallel, one subproblem for each $\frac{\partial L}{\partial A_i}$. 

\paragraph{Details.} For completeness, we state pseudo-code in \Cref{algo:backward}, which we now explain with the help of  \Cref{fig:schematic}. 
\Cref{subfig:sequential} depicts a computational graph of Step 1 in \Cref{algo:backward}. 
In the figure, consider $\frac{\partial L}{\partial A_1}$ and $P_1^T$, which both have directed edges to a multiplication node (denoted by~$\cdot$).
The output of this multiplication is $\frac{\partial L}{\partial A_2}$ by \Cref{equ:back_A}. 
This can be repeated to obtain $\frac{\partial L}{\partial A_{2}},...,\frac{\partial L}{\partial A_{d/m+1}}$. 

Step 2 computes the gradient of all Householder vectors $\frac{\partial L}{\partial v_j}$. 
This computation is split into $d/m$ distinct subproblems that can be solved in parallel. 
Each subproblem concerns $\frac{\partial L}{\partial A_i}$ and the product $P_i$, see line 8-10 in \Cref{algo:backward}. 

To ease notation, we index the Householder matrices of $P_i$ by $P_i=\widehat H_1 \cdots \widehat H_m$.
Furthermore, we let $\widehat A_{m+1}:=A_{i+1}$ and $\widehat A_j := \widehat H_j \widehat A_{j+1} $.
The notation implies that $\widehat A_1 = \widehat H_1\cdots \widehat H_{m}\widehat A_{m+1}=P_i A_{i+1}=A_i$. 
The goal of each subproblem is to compute gradients wrt. the Householder vectors $\widehat v_m, ..., \widehat v_1$ of $\widehat H_m,..., \widehat H_1$. 
To compute the gradient of $\widehat v_j$, we need $\widehat A_{j+1}$ and $\frac{\partial L}{\partial \widehat A_j}$, which can be computed by: 
\begin{align}\label{equ:A_hat}
\widehat A_{j+1} = \widehat H_j^{-1} \widehat A_j = \widehat H_{j}^T \widehat A_j 
\quad \quad \quad 
\frac{\partial L}{\partial \widehat{A}_{j+1}} = \left[\frac{\partial\widehat{A}_j} {\partial \widehat{A}_{j+1}} \right]^T \frac{\partial L}{\partial \widehat{A}_j} = \widehat H_j^T \frac{\partial L}{\partial \widehat{A}_j}
\end{align}
\Cref{subfig:parallel} depicts how 
$\widehat A_2,...,\widehat A_{m+1}$ and 
$\frac{ \partial L}
{\partial \widehat A_2}
,...,
\frac{\partial L}{\partial \widehat A_{m+1}}$ 
can be computed given $\widehat A_1$ and $\frac{ \partial L}{\partial \widehat A_1}$.
Given $\widehat A_{j+1}$ and $\frac{\partial L}{\partial \widehat A_j}$, we can compute $\frac{\partial L}{\partial \widehat v_j}$ as done in \cite{svdnn, orthhh}. 
For completeness, we restate the needed equation in our notation, see \Cref{equ:back_v}. 

Let $a^{(l)}$ be the $l$'th column of $\widehat A_{j+1}$ 
and let $g^{(l)}$ be the $l$'th column of  $\frac{\partial L}{\partial \widehat A_j}$. 
The sum of the gradient over a mini-batch of size $m$ is then: 
\begin{equation}\label{equ:back_v}
-\frac{2}{||\widehat v_j||^2_2} 
\sum_{l=1}^m
(\widehat v_j^T a^{(l)})g^{(l)} 
+ (\widehat v_j^T g^{(l)}) a^{(l)} 
- \frac{2}{||\widehat v_j||^2_2} (\widehat v_j^T a^{(l)}) (\widehat v_j^T g^{(l)}) \widehat v_j 
\end{equation}

\Cref{thm:backwards} states properties of \Cref{algo:backward}.

\begin{theorem}\label{thm:backwards}
\Cref{algo:backward} computes $\frac{\partial L}{\partial X}$ and 
$\frac{\partial L}{\partial v_1},...,\frac{\partial L}{\partial v_d}$
in $O(d^2m)$ time with $O(d/m+m)$ sequential matrix~multiplications. 
\end{theorem}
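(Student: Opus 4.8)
The plan is to follow the same three-part template used for \Cref{thm:forward_algorithm}: establish \textbf{correctness}, then bound the \textbf{time complexity}, and finally count the \textbf{number of sequential operations}. Since the backward pass in \Cref{algo:backward} is organized into the sequential Step~1 and the parallel Step~2, I would argue each claim separately for the two steps and then combine them.

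For correctness of Step~1, I would invoke the chain rule on the factorization $A_i = P_i A_{i+1}$: the Jacobian $\partial A_i/\partial A_{i+1}$ equals $P_i$, so transposing gives exactly the recurrence $\frac{\partial L}{\partial A_{i+1}} = P_i^T \frac{\partial L}{\partial A_i}$ stated in \Cref{equ:back_A}. Iterating this from the supplied $\frac{\partial L}{\partial A_1}$ produces every $\frac{\partial L}{\partial A_i}$, and because $X = A_{d/m+1}$, the final term is the desired $\frac{\partial L}{\partial X}$. For Step~2 I would fix a subproblem $i$ and verify that $\widehat A_1 = A_i$ and $\frac{\partial L}{\partial \widehat A_1} = \frac{\partial L}{\partial A_i}$ hold by definition, then use that each Householder matrix is orthogonal (so $\widehat H_j^{-1} = \widehat H_j^T$) to confirm the reconstruction and gradient recurrences in \Cref{equ:A_hat}. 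Having recovered $\widehat A_{j+1}$ and $\frac{\partial L}{\partial \widehat A_j}$, the per-vector gradient follows from \Cref{equ:back_v}, which I would cite rather than rederive.

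For the time complexity I would exploit the WY representation just as in the forward pass: since $P_i = I - 2W_iY_i^T$ we have $P_i^T = I - 2Y_iW_i^T$, so each Step~1 multiplication against the $d\times m$ gradient costs $O(dm^2)$, giving $O(d^2m)$ over the $d/m$ iterations. In Step~2 each subproblem performs $m$ Householder multiplications (each $O(dm)$) to reconstruct the intermediate activations and propagate gradients, plus $m$ evaluations of \Cref{equ:back_v}, each dominated by a constant number of length-$d$ inner products; both contributions are $O(dm^2)$ per subproblem, hence $O(d^2m)$ across the $d/m$ subproblems. For the sequential count, Step~1 contributes $d/m$ sequential matrix multiplications, while each Step~2 subproblem is $O(m)$ sequential Householder multiplications that run in parallel across subproblems, for a combined $O(d/m + m)$.

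The main obstacle I anticipate is not the complexity bookkeeping but the correctness argument for Step~2, specifically justifying that the intermediate activations $\widehat A_{j+1}$ may be regenerated on the fly by applying the inverses $\widehat H_j^T$ rather than stored. This relies on the involution property of Householder matrices and on the precomputed $A_i$ serving as a correct anchor, and I would take care to confirm that walking the recurrence \Cref{equ:A_hat} forward from $\widehat A_1$ reproduces the same activations the forward pass would have produced, so that \Cref{equ:back_v} is fed the correct operands.
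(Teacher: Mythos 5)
Your proposal is correct and follows essentially the same route as the paper's own proof: chain rule plus the WY transpose identity $P_i^T = I - 2Y_iW_i^T$ for Step~1, the Householder involution $\widehat H_j^{-1}=\widehat H_j^T$ to regenerate activations on the fly in Step~2, and the identical $O(d/m)+O(m)$ sequential accounting. The only nit is that each evaluation of \Cref{equ:back_v} involves $O(m)$ length-$d$ inner products rather than a constant number, but your stated $O(dm)$-per-line and $O(dm^2)$-per-subproblem bounds are still the right ones, so nothing breaks.
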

\begin{proof} See the Supplementary Material \ref{sup:proof_backwards}.  \end{proof}

\begin{minipage}{0.44\textwidth}
\begin{algorithm}[H]
    \caption{FastH Forward}
    \label{algo:forward}
    \footnotesize
    \begin{algorithmic}[1]
       \STATE {\bfseries Input:} $X\in \R^{d\times m}$ and $H_1, ..., H_d\in\R^{d\times d}$. 
       \vspace{-1.5mm} 
       \STATE {\bfseries Output:} $A_{1}=H_1 \cdots H_d  X$. 
       \vspace{1.5mm} 
       \STATE // Step 1 
       \FOR [\textbf{in parallel}] {$i=d/m$ {\bfseries to} $1$}
       \STATE {Compute $Y_i,W_i\in \R^{d\times m}$ st. }\\
        {$\hspace{5mm}P_i=I-2W_iY_i^T$} \COMMENT{\hfill $\triangleright \; O(dm^2)$} \\
       by using \Cref{lemma:wydec}. 
       \ENDFOR
       \vspace{1.5mm}
       \STATE // Step 2 
       \STATE $A_{d/m+1}=X.$
       \FOR[\textbf{sequentially}] {$i=d/m$ {\bfseries to} $1$}
       \STATE $A_{i} = A_{i+1} - 2W_i(Y_i^TA_{i+1})$. \COMMENT{\hfill $\triangleright \;O(dm^2)$}
       \ENDFOR
       \STATE \textbf{return} $A_1$. 
       \vspace{16.5mm}
    \end{algorithmic}
\end{algorithm}
\end{minipage}
\hfill
\begin{minipage}{0.54\textwidth}
\begin{algorithm}[H]
    \caption{FastH Backward}
    \label{algo:backward}
    \footnotesize
    \begin{algorithmic}[1]
       \STATE {\bfseries Input:}  $A_1,...,A_{d/m+1}$, $P_1,..., P_{d/m}$ and $\frac{\partial L}{\partial A_1}$. 
       \STATE {\bfseries Output:} 
       $\frac{\partial L}{\partial X}$ and 
       $\frac{\partial L}{\partial v_k}$ for all $k$  where 
       $H_k=I-2\frac{v_kv_k^T}{||v_k||^2_2}$.
       \STATE // Step 1 
       \FOR[\textbf{sequentially}]{$i=1$ {\bfseries to} $d/m$}
       \STATE $\frac{ \partial L}{\partial A_{i+1}}=P_i^T\frac{\partial L}{\partial A_i}$ \cref{equ:back_A}.  \COMMENT{\hfill $\triangleright \; O(dm^2)$}
       \ENDFOR
       \vspace{2mm}
       \STATE // Step 2 
       \FOR[\textbf{in parallel}]{$i=1$ {\bfseries to} $d/m$}
    	\STATE Let $\frac{\partial L}{\partial \widehat{A}_1}=\left(\frac{\partial L} {\partial A_i}\right)$. 
        \STATE To ease notation, let $P_i= \widehat H_1 \cdots \widehat H_m$.
       \FOR{$j=1$ {\bfseries to} $m$}
    	\STATE Compute $\widehat A_{j+1}, \frac{\partial L}{\partial \widehat A_{j}}$
    	see \cref{equ:A_hat}.   \COMMENT{\hfill $\triangleright \; O(dm)$} \\
    	\STATE Compute $\frac{\partial L}{\partial \widehat v_j}$ 
    	using $\widehat A_{j+1},\frac{\partial L}{\partial \widehat A_j}$, 
    	\cref{equ:back_v}. 
    	\COMMENT{\hfill $\triangleright \; O(dm)$}
       \ENDFOR
       \ENDFOR
       \STATE \textbf{return} $\frac{\partial L}{\partial X} =\frac{\partial L}{\partial A_{d/m+1}}$ and $\frac{\partial L}{\partial v_k}$ 
       for all $k=1,...,d$. 
    \end{algorithmic}
\end{algorithm}
\end{minipage}

\subsection{Extensions} 
\paragraph{Trade-off. } 
Both Algorithm 1 and Algorithm 2 can be extended to take a parameter $k$ that controls a trade-off between \emph{total time complexity} and \emph{the amount of parallelism}. 
This can be achieved by changing the number of Householder matrices in each product $P_i$ from the mini-batch size $m$ to an integer $k$. 
The resulting algorithms take $O(d^2k+d^2m)$ time, $O(d^2m/k)$ space and has $O(d/k+k)$ sequential matrix multiplications.
This extension has the practical benefit that one can try different values of $k$ and choose the one that yields superior performance on a particular hardware setup.
Note that we never need to search for $k$ more than one time. 
The number of sequential matrix multiplications $O(d/k+k)$ is minimized when $k=O(\sqrt{d})$. 
For a constant $c>1$, we can find the best $k\in\{2,3,..., c\lceil \sqrt{d}\rceil \}$ by trying all $O(\sqrt{d})$ values. 
The search needs to be done only once and takes $O(\sqrt{d}(d^2k+d^2m))=O(d^3+d^{2.5}m)$ time. 
In practice, the time it took to find $k$ was negligable, e.g., on the hardware we describe in \Cref{sec:exp} we found $k$ in less than $1s$ for $d=784$. 

\paragraph{Rectangular Matrices. } We can use the SVD reparametrization for rectangular $W\in\mathbb{R}^{n\times m}$. 
Use orthogonal $U\in\mathbb{R}^{n \times n},V\in\mathbb{R}^{m\times m}$ and diagonal $\Sigma\in\mathbb{R}^{n\times m}$ and let $W=U\Sigma V^T$. 

\paragraph{Convolutional Layers. }
So far, we have considered the SVD reparameterization for matrices which corresponds to fully connected layers. 
The matrix case extends to convolutions by, e.g., $1\times 1$ convolutions \cite{glow}. 
The SVD reparameterization can be used for such convolutions without increasing the time complexity. 
On an input with height $h$ and width $w$ FastH performs $O(d/m + mhw)$ sequential matrix multiplications instead of the $O(d)$ sequential inner products. 

\paragraph{Recurrent Layers. }
The SVD reparameterization was developed for Recurrent Neural Networks (RNNs) \cite{svdnn}. 
Let $r$ be the number of recurrent applications of the RNN. 
FastH performs $O(d/m+rm)$ sequential matrix operations instead of the $O(d)$ sequential inner products.

\section{Experiments} 
\label{sec:exp}
This section contains two experiments. 
\Cref{subsec:running_time} compares the running time of FastH against alternatives.  
\Cref{subsec:matrix_operations} shows that FastH speeds up matrix operations. 
To simulate a realistic machine learning environment, we performed all experiments on a standard machine learning server using a single NVIDIA RTX 2080 Ti. 

\subsection{Comparing Running Time }
\label{subsec:running_time}
This subsection compares the running time of FastH against four alternative algorithms. 
We compare the time all algorithms spend on gradient descent with a single orthogonal matrix, since 
such constrained gradient descent dominates the running time of the SVD reparameterization. 

We first compare FastH against the parallel and sequential algorithm from \cite{svdnn}, all three algorithms rely on the Householder decomposition. 
For completeness, we also compare against approaches that does not rely on the Householder decomposition, in particular, the matrix exponential and the Cayley map \cite{matrixexp2}\footnote{
For the matrix exponential and the Cayley map we used the open-source implementation
\href {https://github.com/Lezcano/expRNN } {https://github.com/Lezcano/expRNN }  
from \cite{matrixexp2}.  
For the Householder decomposition, we used the open-source implementation 
\href {https://github.com/zhangjiong724/spectral-RNN}
{https://github.com/zhangjiong724/spectral-RNN}
of the sequential and parallel algorithm from~\cite{svdnn}. 
}. 
See Supplementary Material \ref{sup:running_time} for further details.

We measure the time of a gradient descent step with a weight matrix $W\in \R^{d\times d}$ and a mini-batch $X\in\R^{d\times m}$, where $m=32$ and $d=1\cdot 64,2\cdot 64,...,48\cdot 64$. 
We ran each algorithm $100$ times, and we report mean time $\mu$ with error bars $[\mu-\sigma, \mu+\sigma]$ where $\sigma$ is the standard deviation of running time over the $100$ repetitions.

\Cref{fig:run} depicts the running time on the y-axis, as the size of the $d\times d$ matrices increases on the x-axis. 
For $d>64$, FastH is faster than all previous approaches. 
At $d=64$ FastH is faster than all previous approaches, except the parallel algorithm. 
Previous work employ sizes $d=192$ in \cite{glow} and $d=784$ in \cite{svdnn}. 

\begin{figure}[t]
    \begin{subfigure}{0.49\textwidth}
        \centering
        \includegraphics[width=0.98\textwidth]{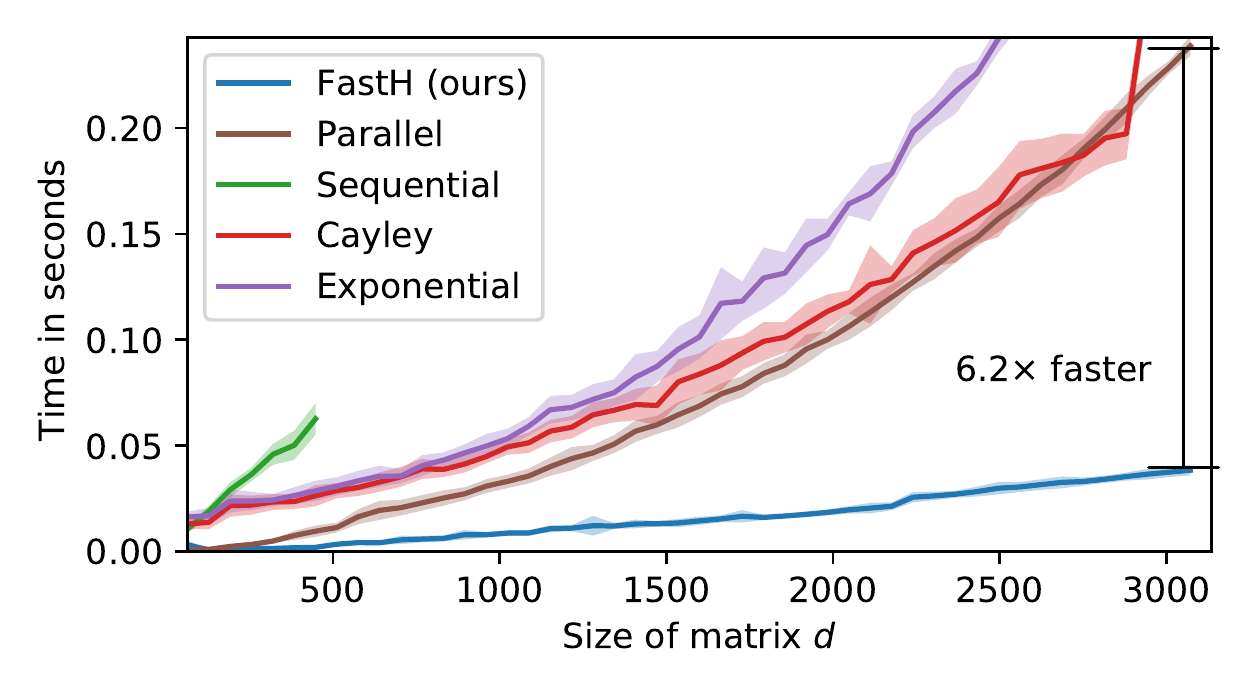}
        \caption{Running time.}
        \label{fig:run}
    \end{subfigure}
    \hspace{0.02\textwidth}
    \begin{subfigure}{0.49\textwidth}
        \centering
        \includegraphics[width=0.98\textwidth]{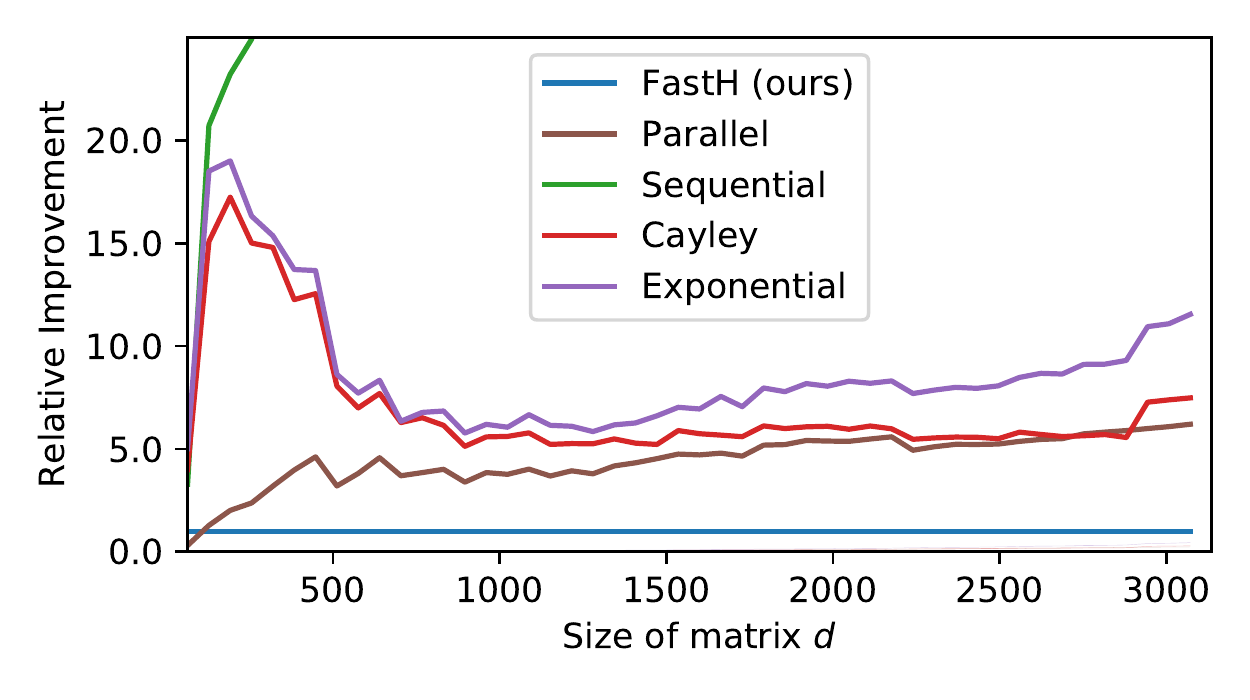}
        \caption{Relative improvement.}
        \label{fig:relative}
    \end{subfigure}
    \caption{Comparisons of the running times for FastH against previous algorithms. 
    The sequential algorithm from \cite{svdnn} crashed when $d>448$. (a)
        Running times of different algorithms for $d\times d$ matrices.
    (b)
        Running times of FastH relative to previous algorithms, i.e., the mean time of a previous algorithm divided by the mean time of FastH. 
    }
    \label{fig:runningtimes}
\end{figure} 

\Cref{fig:relative} depicts how much faster FastH is relative to the previous algorithms, i.e., the mean time of a previous algorithm divided by the time of FastH, which we refer to as relative improvement.  
For $d>500$, the relative improvement of FastH increases with $d$. 

At $d=448$ FastH is roughly $25\times$ faster than the sequential algorithm. 
FastH is even faster with $d=3072$ than the sequential algorithm with $d=448$. 
Previous work like \cite{emerging,sylvesternf} use the Householder decomposition with the sequential algorithm.
Since FastH computes the same thing as the sequential algorithm, it can be used to reduce computation time with no downside.

\begin{table}[h]
\centering
\caption{Relating standard method to matrix decompositions for computing matrix operations. }
\label{tab:overview}
\begin{tabular}{lll}
\toprule
Matrix Operation & Standard Method & SVD or Eigendecomposition  \\ \midrule
Determinant             & \textsc{torch.slogdet}(W)                & $\sum_{i=1}^d \lg |\Sigma_{ii}|$       \\
Inverse                 & \textsc{torch.inverse}(W)                & $V \Sigma^{-1} U^T$               \\
Matrix Exponential      & Padé Approximation \cite{matrixexp2}  & $U e^\Sigma U^T $                 \\
Cayley map              & \textsc{torch.solve}(I-W, I+W)        & $U(I{-}\Sigma)(I{+}\Sigma)^{-1}U^T$   \\
\bottomrule
\end{tabular}
\end{table}

%
%

\subsection{Using the SVD to Compute Matrix Operations }\label{subsec:matrix_operations}
This subsection investigates whether the SVD reparameterization achieves practical speed-ups for matrix operations like matrix inversion.  
We consider four different matrix operations. 
For each operation, we compare the SVD reparameterization against the standard method for computing the specific matrix operation, see \Cref{tab:overview}. 
\begin{wrapfigure}[16]{R}{0.47\textwidth}
    \centering
    \vspace{-7mm}
    \includegraphics[width=0.475\textwidth]{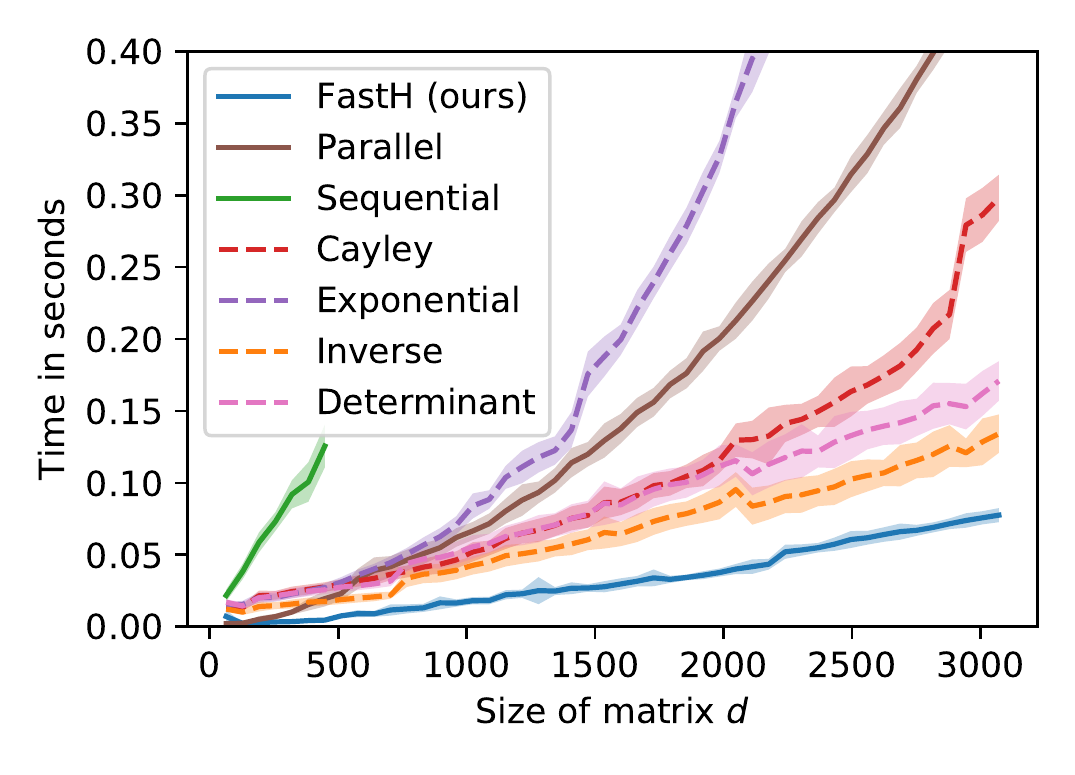}
    \caption{Running time of matrix operations. 
    Solid lines depict approaches which use the SVD reparameterization and dashed lines depict standard methods like \textsc{torch.inverse}. 
    }
    \label{fig:matrix_operations}
\end{wrapfigure}

\paragraph{Timing the Operations.} 
The matrix operations are usually used during the forward pass of a Neural Network, which change the subsequent gradient computations.
We therefore measure the sum of the time it takes to compute the matrix operation, the forward pass and the subsequent gradient computations. 

For example, with matrix inversion, we measure the time it takes to compute the matrix operation $\Sigma^{-1}$, the forward pass $W^{-1}X=V\Sigma^{-1}U^TX$ and the subsequent gradient computation wrt. $U, \Sigma, V$ and $X$. 
The measured time is then compared with \textsc{torch.inverse}, i.e, we compare against the total time it takes to compute \textsc{torch.inverse}(W), the forward pass $W^{-1}X$, and the subsequent gradient computation wrt. $W$ and $X$.

\paragraph{Setup.} 
We run the SVD reparameterization with three different algorithms: FastH, the sequential and the parallel algorithm from \cite{svdnn}. 
For each matrix operation, we consider matrices $V,\Sigma,U,W\in \R^{d\times d}$ and $X\in \R^{d\times M}$, where $m=32$ and $d=1\cdot 64,2\cdot 64, ..., 48\cdot 64$. 
We repeat the experiment $100$ times, and report the mean time $\mu$ with error bars $[\mu-\sigma, \mu+\sigma]$ where $\sigma$ is the standard deviation of the running times over the $100$ repetitions. 
To avoid clutter, we plot only the time of FastH for the matrix operation it is slowest to compute, 
and the time of the sequential and parallel algorithms for the matrix operation they were fastest to compute.

\Cref{fig:matrix_operations} depicts the measured running time on the y-axis with the size of the $d\times d$ matrices increasing on the x-axis. 
Each matrix operation computed by a standard method is plotted as a dashed line, and the different algorithms for the SVD reparameterization are plotted as solid lines. 
In all cases, FastH is faster than the standard methods. For example, with $d=768$, FastH is $3.1\times$ faster than the Cayley map, $4.1\times$ faster than the matrix exponential, $2.7\times$ faster than inverse and $3.5\times$ faster than matrix determinant. 
The sequential algorithm is not fast enough to speed up any matrix operation.

\section{Related Work}  
\label{sec:rw}

\paragraph{The Householder Decomposition. }
The Householder decomposition of orthogonal matrices has been used in much previous works, e.g., 
\cite{hhnf,orthhh,svdnn,sylvesternf,emerging}. 
Previous work typically use a type of sequential algorithm that performs $O(d)$ sequential inner products. 
To circumvent the resulting long computation time on GPUs, previous work often suggest limiting the number of Householder matrices, which limits the expressiveness of the orthogonal matrix, introducing a trade-off between computation time and expressiveness. 

FastH takes the same asymptotic time as the sequential algorithm, however, it performs less sequential matrix operations, making it up to $27\times $ faster in practice.
Since FastH computes the same output as the previous sequential algorithms, 
it can be used in previous work without degrading the performance of their model.
In particular, FastH can be used to either (1) increase expressiveness at no additional computational cost or (2) retain the same level of expresiveness at lower computational cost. 

\paragraph{SVDs in Neural Networks.}
The authors of \cite{svdnn} introduced a technique that provides access to the SVD of the weights in a Neural Network without computing the SVD. 
Their motivation for developing this technique was the exploding/vanishing gradient issue in RNNs. 
In particular, they use the SVD reparameterization to force all singular values to be within the range $[1\pm \epsilon]$ for some small $\epsilon$. 

We point out that although their technique, in theory, can be used to speed up matrix operations, their algorithms are too slow to speed-up most matrix operations in practice. 
To mitigate this problem, we introduce a new algorithm that is more suitable for GPUs, which allows us to speed up several matrix operations in practice. 

\paragraph{Different Orthogonal Parameterizations. }
The SVD reparameterization by \cite{svdnn} uses the Householder decomposition to perform gradient descent with orthogonal matrices. 
Their work was followed by \cite{cayley1} that raises a theoretical concern about the use of Householder decomposition. 
Alternative approaches based on the matrix exponential and the Cayley map have desirable provable guarantees, which currently, it is not known whether the Householder decomposition possesses. 
This might make it desirable to use the matrix exponential or the Cayley map together with the SVD reparameterization from \cite{svdnn}. 
However, previous work spend $O(d^3)$ time to compute or approximate the matrix exponential and the Cayley map. 
These approaches are therefore undesirable, because they share the $O(d^3)$ time complexity with SVD and thus cannot speed up SVD computations. 

\paragraph{Normalizing Flows. }
Normalizing Flows \cite{nice} is a type of generative model that, in some cases \cite{glow,emerging}, entails the computation of matrix determinant and matrix inversion. 
\cite{glow} propose to use the PLU decomposition $W=PLU$ where $P$ is a permutation matrix and $L,U$ are lower and upper triangular. 
The decomposition allows the determinant computation in $O(d)$ time instead of $O(d^3)$. 
\cite{emerging} point out that a fixed permutation matrix $P$ limits flexibility. 
To fix this issue, they suggest using the $QR$ decomposition where $R$ is a rectangular matrix and $Q$ is orthogonal.
They suggest making $Q$ orthogonal by using the Householder decomposition which FastH can speed up. 
Alternatively, one could use the SVD decomposition instead of the QR or PLU decomposition. 

\section{Code} 
To make FastH widely accessible, we wrote a PyTorch implementation of the SVD reparameterization which uses the FastH algorithm. 
The implementation can be used by changing just a single line of code, i.e, change \textsc{nn.Linear} to \textsc{LinearSVD}. 
While implementing FastH, we found that Python did not provide an adequate level of parallelization. 
We therefore implemented FastH in CUDA to fully utilize the parallel capabilities of GPUs. 
Code: \url{www.github.com/AlexanderMath/fasth/}. 


\section{Conclusion}
\label{sec:con}
We pointed out that, in theory, the techniques from \cite{svdnn,orthhh} allow for decreasing the time complexity of several matrix operations used in Neural Networks.
However, in practice, we demonstrated that the techniques are not fast enough on GPUs for moderately sized use-cases. 
We proposed a novel algorithm, FastH, that remedies the issues with both algorithms from \cite{svdnn}, which is up to $27\times$ faster than the previous sequential algorithm. 
FastH introduces no loss of quality, it computes the same result as the previous algorithms, just faster. FastH brings two immediate benefits: (1) improves upon the techniques from \cite{svdnn} in such a way that it is possible to speed up matrix operations, and (2) speeds up previous work that employ the Householder decomposition as done in, e.g., \cite{hhnf,sylvesternf,emerging}. 

\clearpage 
\section*{Broader Impact}
\label{sec:broader-impact}
Our algorithm speeds up the use of Householder decompositions in Neural Networks. 
This can positively impact researchers who use Householder decompositions, since they will be able to execute experiments faster. 
This is particularly beneficial for researchers with a constraint on their computational budget, in other words, a PhD student with one GPU stands to benefit more than a lab with state-of-the-art GPU computing infrastructure. 
The reduction in computing time also decrease power consumption and thus carbon emissions. 
However, as a potential negative impact, it is possible that the decrease in computation time will increase the usage of Neural Networks and thus increase overall carbon emission. 

\bibliography{bibliography}
\bibliographystyle{plain}

\clearpage

\section{Supplementary Material}

\subsection{Proof of \Cref{thm:backwards}. }\label{sup:proof_backwards}

\begin{theorem*}
\Cref{algo:backward} computes $\frac{\partial L}{\partial X}$ and 
$\frac{\partial L}{\partial v_1},...,\frac{\partial L}{\partial v_d}$
in $O(d^2m)$ time with $O(d/m+m)$ sequential matrix~multiplications. 
\end{theorem*}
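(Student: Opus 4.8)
The plan is to mirror the three-part structure of the proof of \Cref{thm:forward_algorithm}, treating correctness, the $O(d^2m)$ time bound, and the $O(d/m+m)$ sequential-operation count separately. For \textbf{correctness}, I would argue the two steps of \Cref{algo:backward} in turn. Step 1 is plain reverse-mode differentiation through the chain $A_i = P_i A_{i+1}$: this map is linear in $A_{i+1}$ with Jacobian $P_i$, so the adjoint rule gives $\frac{\partial L}{\partial A_{i+1}} = P_i^T \frac{\partial L}{\partial A_i}$, exactly \Cref{equ:back_A}; iterating $i=1,\dots,d/m$ from the given $\frac{\partial L}{\partial A_1}$ produces every $\frac{\partial L}{\partial A_{i+1}}$, and in particular $\frac{\partial L}{\partial X} = \frac{\partial L}{\partial A_{d/m+1}}$ since $X = A_{d/m+1}$. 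For Step 2 I would fix a subproblem $i$, adopt the hat-notation $P_i = \widehat H_1 \cdots \widehat H_m$ with $\widehat A_{m+1} := A_{i+1}$, and verify the telescoping identity $\widehat A_1 = P_i A_{i+1} = A_i$, so that the precomputed $A_i$ and the Step-1 gradient $\frac{\partial L}{\partial A_i}$ correctly seed $\widehat A_1$ and $\frac{\partial L}{\partial \widehat A_1}$. The crux is \Cref{equ:A_hat}: using that each Householder matrix is orthogonal and symmetric, so $\widehat H_j^{-1} = \widehat H_j^T$, I would show that both the activation recursion $\widehat A_{j+1} = \widehat H_j^T \widehat A_j$ and the gradient recursion $\frac{\partial L}{\partial \widehat A_{j+1}} = \widehat H_j^T \frac{\partial L}{\partial \widehat A_j}$ march forward in $j$ from the seed, recovering exactly the $\widehat A_{j+1}$ and $\frac{\partial L}{\partial \widehat A_j}$ required as inputs to \Cref{equ:back_v}. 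The formula \Cref{equ:back_v} for $\frac{\partial L}{\partial \widehat v_j}$ I would simply invoke from \cite{svdnn,orthhh}.

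For the \textbf{time complexity}, the key observation — paralleling the forward proof — is to never materialize $P_i$ or $P_i^T$ densely. In Step 1, writing $P_i^T = I - 2 Y_i W_i^T$ turns each update into $\frac{\partial L}{\partial A_{i+1}} = \frac{\partial L}{\partial A_i} - 2 Y_i ( W_i^T \frac{\partial L}{\partial A_i} )$; the inner product $W_i^T \frac{\partial L}{\partial A_i}$ costs $O(dm^2)$ and the subsequent multiplication by $Y_i$ another $O(dm^2)$, so the $d/m$ iterations total $O(d^2m)$. In Step 2, each of the $m$ inner iterations performs a constant number of Householder multiplications on $d\times m$ matrices via \Cref{equ:A_hat} at $O(dm)$ each, plus the evaluation of \Cref{equ:back_v}, a sum of $m$ terms each built from $O(d)$-cost inner products, for $O(dm)$ per iteration; the $m$ iterations give $O(dm^2)$ per subproblem and the $d/m$ subproblems give $O(d^2m)$ overall.

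For the \textbf{sequential-operation count}, I would note that Step 1 is an inherently sequential chain of $d/m$ iterations, each a constant number of matrix multiplications, contributing $O(d/m)$. The $d/m$ subproblems of Step 2 are independent and run in parallel, so only the cost of a single subproblem is counted; its inner loop is sequential in $j$ because each $\widehat A_{j+1}$ and $\frac{\partial L}{\partial \widehat A_{j+1}}$ depends on its predecessor, giving $O(m)$ sequential Householder multiplications. Summing the two steps yields $O(d/m + m)$.

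I expect the \textbf{main obstacle} to be the correctness argument for Step 2. One must justify that recomputing the forward activations by \emph{undoing} the Householder multiplications, via $\widehat A_{j+1} = \widehat H_j^T \widehat A_j$ rather than storing them, recovers precisely the values produced in the original forward pass, and that this recomputed sequence is consistent with the precomputed $A_i$. The orthogonality identity $\widehat H_j^{-1} = \widehat H_j^T$ is the linchpin: it is what makes the reconstruction both valid and cheap, keeping memory and time within budget while supplying \Cref{equ:back_v} with exactly the operands it needs.
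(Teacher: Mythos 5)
Your proposal is correct and follows essentially the same route as the paper's own proof: the WY-transpose identity $P_i^T = I - 2Y_iW_i^T$ for the $O(dm^2)$-per-iteration bound in Step 1, the orthogonality/symmetry of Householder matrices to recompute activations via \Cref{equ:A_hat} in the style of backpropagation without stored activations, and the same parallel-versus-sequential accounting yielding $O(d/m+m)$. Your correctness argument is somewhat more self-contained (deriving the adjoint rule explicitly rather than deferring to \cite{svdnn}), but the substance is identical.
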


\begin{proof}
\textbf{Correctness. }FastH computes gradients by the same equations as \cite{svdnn}, so in most cases, we show correctness by clarifying how FastH computes the same thing, albeit faster. 

Consider $\frac{\partial L}{\partial X}$ computed in Step 1: 
\begin{align*}
    \frac{\partial L}{\partial X}= \frac{\partial L}{\partial A_{d/m+1}}&=P_{d/m}^T \cdots P_{1}^T\frac{\partial L}{\partial A_1} \\
    &=H_d^T \cdots H_1^T \frac{\partial L}{\partial A_1}. &&\cref{equ:P}
\end{align*}
This is the same as that computed in \cite{svdnn}. 

Consider Step 2. 
Both $\frac{\partial L}{\partial \widehat v_j}$ and $\frac{\partial L}{\partial \widehat A_j}$ are computed as done in \cite{svdnn}. 
$\widehat A_{j+1}$ is computed using \Cref{equ:A_hat} similar to backpropagation without storing activations \cite{revnet}, but using the fact that $\widehat H_j^T = \widehat H_j^{-1}$.

\textbf{Time Complexity. } 
In Step 1, the for loop performs $d/m$ matrix multiplications. 
Due to the WY decomposition $P_i^T=(I-2WY^T)^T=I-2YW^T$ which can be multiplied on $\frac{\partial L}{\partial A_i}\in \R^{d\times m}$ in $O(dm^2)$ time since $W,Y\in \R^{d \times m}$. 
The computation is repeated $d/m$ times, and take a total of $O(d^2m)$ time. 

Step 2 line 12 in \Cref{algo:backward_} performs two Householder matrix multiplications which take $O(dm)$ time, see \Cref{equ:A_hat}. 
In line 13, the gradient is computed by \Cref{equ:back_v}, this sum also takes $O(dm)$ time to compute. 
Both computations on line 12 and 13 are repeated $d/m \cdot m$ times, see line 8 and line 11. 
Therefore, the total time is $O(d^2m)$.

\textbf{Number of Sequential Operations. } Step 1 performs $O(d/m)$ sequential matrix operations.  
Lines 11-14 of Step 2 perform $O(m)$ sequential matrix multiplications. 
Since each iteration of line 8-15 is run in parallel, the algorithm performs no more than $O(d/m+m)$ sequential matrix multiplications. 
\end{proof}

\begin{algorithm}[H]
    \caption{FastH Backward}
    \label{algo:backward_}
    \footnotesize
    \begin{algorithmic}[1]
       \STATE {\bfseries Input:}  $A_1,...,A_{d/m+1}$, $P_1,..., P_{d/m}$ and $\frac{\partial L}{\partial A_1}$. 
       \STATE {\bfseries Output:} 
       $\frac{\partial L}{\partial X}$ and 
       $\frac{\partial L}{\partial v_k}$ for all $k$  where 
       $H_k=I-2\frac{v_kv_k^T}{||v_k||^2_2}$.
       \STATE // Step 1 
       \FOR[\textbf{sequentially}]{$i=1$ {\bfseries to} $d/m$}
       \STATE $\frac{ \partial L}{\partial A_{i+1}}=P_i^T\frac{\partial L}{\partial A_i}$ \cref{equ:back_A}.  \COMMENT{\hfill $\triangleright \; O(dm^2)$}
       \ENDFOR
       \vspace{2mm}
       \STATE // Step 2 
       \FOR[\textbf{in parallel}]{$i=1$ {\bfseries to} $d/m$}
    	\STATE Let $\frac{\partial L}{\partial \widehat{A}_1}=\left(\frac{\partial L} {\partial A_i}\right)$. 
        \STATE To ease notation, let $P_i= \widehat H_1 \cdots \widehat H_m$.
       \FOR{$j=1$ {\bfseries to} $m$}
    	\STATE Compute $\widehat A_{j+1}, \frac{\partial L}{\partial \widehat A_{j}}$
    	see \cref{equ:A_hat}.   \COMMENT{\hfill $\triangleright \; O(dm)$} \\
    	\STATE Compute $\frac{\partial L}{\partial \widehat v_j}$ 
    	using $\widehat A_{j+1},\frac{\partial L}{\partial \widehat A_j}$, 
    	\cref{equ:back_v}. 
    	\COMMENT{\hfill $\triangleright \; O(dm)$}
       \ENDFOR
       \ENDFOR
       \STATE \textbf{return} $\frac{\partial L}{\partial X} =\frac{\partial L}{\partial A_{d/m+1}}$ and $\frac{\partial L}{\partial v_k}$ 
       for all $k=1,...,d$. 
    \end{algorithmic}
\end{algorithm}

\subsection{Comparing Running Time }\label{sup:running_time}

This subsection clarifies how the matrix exponential and the Cayley map was used in combination with the SVD reparameterization from \cite{svdnn}. 
It also provides further details on the exact computations we timed in the experiment. 
These details were left out of the main article as they require the introduction of some notation regarding a \emph{reparameterization function}. 

Let $V\in \R^{d\times d}$ be a weight matrix and let $\phi$ be a function that reparameterizes $V$ so $\phi(V)$ is orthogonal, and we can perform gradient descent wrt. $V$. 
The Householder decomposition can be used to construct such a function $\phi$, by letting the columns of $V$ be Householder vectors and $\phi(V)$ be the product of the resulting Householder matrices. 

There exist alternative ways of constructing $\phi$ which does not rely on the Householder decomposition.
For example, the matrix exponential approach where $\phi_{exp}(V)=e^V$ and the Cayley map approach where $\phi_C(V)=(I-V)(I+V)^{-1}$ \cite{matrixexp2}. 

We record the joint time it takes to compute $\phi(V)X$ and the gradients wrt. $V$ and $X$ for a dummy input $X\in\R^{d\times M}$.  
To simplify the gradient computation of $V$, we use a dummy gradient $G\in \R^{d\times M}$ st. the gradient wrt. $V$ is $[\frac{ \partial \phi(V)\cdot X}{\partial V}]^T G $. 
It might be useful to think of $G$ as the gradient that arises by back-propagating through a Neural Network. 

Both the dummy input and the dummy gradient have normally distributed entries $X_{ij},G_{ij}\sim N(0,1)$.

\paragraph{Implementation Details. }
The parallel algorithm from \cite{svdnn} halted for larger values of $d$. 
The failing code was not part of the main computation. 
This allowed us to remove the failing code and still get a good estimate of the running time of the parallel algorithm. 
We emphasize that removing the corresponding code makes the parallel algorithm faster. 
The experiments thus demonstrated that FastH is faster than a lower bound on the running time of the parallel algorithm. 

\subsection{Using the SVD to Compute Matrix Operations}\label{sup:matrix_operations}

This section requires first reading \Cref{subsec:running_time} and \Cref{subsec:matrix_operations}. 
Recall that we, in \Cref{subsec:matrix_operations}, want to measure the total time it takes to compute both the matrix operation, the forward pass and the gradient computations. 
For example, with matrix inversion, we want to compute the matrix operation $\Sigma^{-1}$, the forward pass $V\Sigma^{-1}U^TX$ and the gradient computations wrt $V,\Sigma, U, X$. 

The time of the forward pass and gradient computations is no more than two multiplications and two gradient computations, which is exactly two times what we measured in \Cref{subsec:running_time}.
We re-used those measurements, and add the time it takes to compute the matrix operation, e.g., $\Sigma^{-1}$. 

\paragraph{Over Estimating the Time of FastH.} The matrix exponential and the Cayley map require one orthogonal matrix instead of two, i.e., $U\Sigma U^T$ instead of $U\Sigma V^T$. 
The WY decomposition then only needs to be computed for $U$ and not both $U$ and $V$. 
By re-using the data, we measure the time of two orthogonal matrices, this thus estimates an upper-bound of the real running time of FastH.

\end{document}